\newtheorem{proposition}{Proposition}
\DeclarePairedDelimiter\floor{\lfloor}{\rfloor}
\providecommand{\keywords}[1]
{
  \small	
  \textbf{\textit{Keywords---}} #1
}
\title{Collision avoidance and path finding in a robotic mobile fulfillment system using multi-objective meta-heuristics }
\begin{document}

\author{Ahmad Kokhahi$^{1}$\thanks{CONTACT Ahmad Kokhahi. Email: akokhah@clemson.edu} , Mary E. Kurz$^{1}$  \\
        \small $^{1}$Department of Industrial Engineering, Clemson University, Clemson, SC, USA}

\date{}
\maketitle
\begin{abstract}
    Multi-Agent Path Finding (MAPF) has gained significant attention, with most research focusing on minimizing collisions and travel time. This paper also considers energy consumption in the path planning of automated guided vehicles (AGVs). It addresses two main challenges: i) resolving collisions between AGVs and ii) assigning tasks to AGVs. We propose a new collision avoidance strategy that takes both energy use and travel time into account. For task assignment, we present two multi-objective algorithms: Non-Dominated Sorting Genetic Algorithm (NSGA) and Adaptive Large Neighborhood Search (ALNS). Comparative evaluations show that these proposed methods perform better than existing approaches in both collision avoidance and task assignment.         
\end{abstract}
\keywords{Automated guided vehicles, Collision avoidance, path routing, Task assignment, Energy consumption }

\section{Introduction}

The rapid growth of e-commerce in recent years has significantly transformed people's shopping habits \cite{jiao2023online}. Consumers increasingly favor online shopping over in-person purchases, leading to a substantial impact on product logistics, which plays a crucial role in customer satisfaction. In addition to product quality and other factors, the timely delivery of orders has become a key determinant of customer satisfaction. Picking and replenishment tasks are responsible for 65\% of operating costs \cite{xie2021introducing}. In a conventional manual order picking system, often referred to as a picker-to-parts system, pickers dedicate 70\% of their working time to searching for items and traveling within the facility \cite{tompkins2010facilities,de2007design}. To enhance system efficiency, e-commerce companies such as Amazon have transitioned from traditional warehouses to robotic mobile fulfillment systems (RMFS). The primary distinction between these systems lies in the automation of tasks: in RMFS, robots, rather than human workers, handle the picking and replenishment of pallets. Once robots retrieve the pallets, they transport them to designated workstations, where operators are responsible for unloading. After unloading, the robots return the empty pallets to their original locations for further use. This approach significantly boosts efficiency, as it eliminates the need for pickers to move throughout the warehouse, allowing them to focus solely on picking. As a result, the overall picking productivity can be increased by up to twofold compared to traditional methods where workers must travel to gather items \cite{wurman2008coordinating}. It is possible because of path optimization. Path optimization in RMFSs enables faster and more efficient order picking by reducing travel distances and minimizing congestion. Unlike traditional warehouses with fixed routes, these systems dynamically adjust paths in real time, improving operational scalability and reducing processing times. In this paper, we propose a novel priority rule for collision avoidance focused on minimizing automated guided vehicle (AGV) energy consumption, a heuristic multi-objective task allocation algorithm to reduce collision risk and travel distance, and evaluate these methods computationally for a hypothetical robotic mobile fulfillment system. 
\section{Problem setting} \label{PS}

The proposed layout RMFS in this paper is a traditional parallel aisle warehouse. A sample of this system can be seen in figure \ref{RMFS representation}. In this figure, AGV icons show the initial locations of the AGVs, gray squares show the aisles and operator icons show the locations of workstations. 

\begin{figure}[H]
\centering
    \includegraphics[width= 3 in]{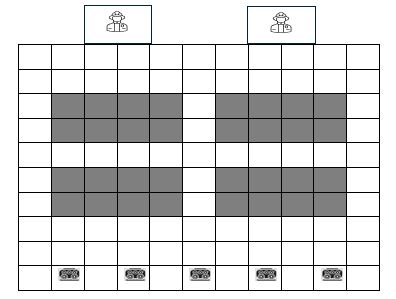}
\caption{Top view of a RMFS }
\label{RMFS representation} 
\end{figure}
The specifications of the RMFS and AGVs are as follows: 
\begin{enumerate}
  \item The RMFS spans an area of \(21\text{ } m \times 47\text{ } m\) and is represented in a square map, where each square measures \(1\text{ } m \times 1\text{ } m\).
  \item Each pod has dimensions of \(1\text{ } m \times 1\text{ } m\). A storage block is composed of \(14\) pods, arranged in a configuration of \(2 \times 7\). 
  \item The center-to-center distance between any two aisles in the warehouse is \(2\text{ } m\). Each cross-aisle has a width of \(3\text{ } m\) wide. Additionally, the vertical distance from the workstation to the nearest pod is \(2\text{ } m\).

  \item Each AGV requires \(8\) seconds for picking and \(3\) seconds for releasing the pod.

  \item In addition to accessing picking locations and workstations, the AGV can move bidirectionally at a constant speed of  \(1\) meter per second, regardless of the impact of acceleration and deceleration. Each vehicle has dimensions of \(1\text{ } m \times 1\text{ } m\).

  \item AGV charging and breakdown can not happen.

  \item eplenishment and out-of-stock situations can not happen.
  \end{enumerate}

In the RMFS, collisions can happen if there is more than one vehicle in the system. There are three types of collisions: head-on collision, cross collision and static collision. In a head-on collision, AGVs are moving directly toward each other. In cross collision, unlike head-on collision, AGVs are not moving towards each other, but their paths intersect at one point. Stay-on collision occurs when one vehicle is stationary and another moving vehicle collides with it by reaching the same position as the stationary one. These collisions can be seen in figures \ref{fig: Head_to_head}, \ref{fig:Cross} and \ref{fig:Static}, respectively.

\begin{figure}[H]
     \captionsetup{justification=raggedright,singlelinecheck=false}
     \begin{subfigure}[b]{0.3\textwidth}
         \includegraphics[width=\textwidth,left]{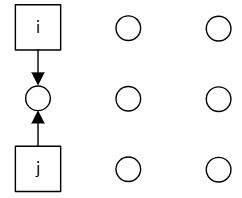}
         \caption{Head-on }
         \label{fig: Head_to_head}
     \end{subfigure}
     \hfill
     \begin{subfigure}[b]{0.3\textwidth}
         \includegraphics[width=\textwidth,left]{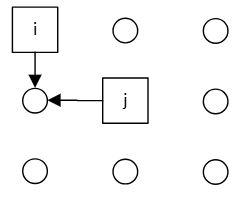}
         \caption{Cross}
         \label{fig:Cross}
     \end{subfigure}
     \hfill
     \begin{subfigure}[b]{0.3\textwidth}
         \includegraphics[width=\textwidth,left]{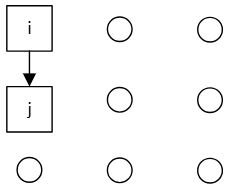}
         \caption{Stay-on}
         \label{fig:Static}
     \end{subfigure}
        \caption{Collision Types}
        \label{fig:Collision}
\end{figure}
To address these conflicts, there are some strategies that can be taken: waiting, go-away and reroute strategy. In the  waiting strategy, one vehicle waits for some time so that other vehicle can pass, then it continues on its path. This strategy is helpful to avoid cross collision. The go-away strategy involves directing AGV j to move to a neighboring node, effectively clearing the path. This maneuver ensures that AGV i can maintain its original path and continue its operations without interruption. This strategy helps optimize the flow and efficiency of both robots' movements. The reroute strategy finds a new path from source to destination that does not collide with another vehicle. The A* algorithm can find different paths from source to destination. If the shortest path finds a collision, then the next path with the lowest time can be taken to avoid collision. This strategy can be applied for all three types of collisions.

In this paper, multiple objectives are considered for optimization. The primary objective is to minimize the running time of AGVs. Additionally, we consider the minimization of energy consumption. Both objectives take into account the maximum value across all vehicles and the total summation of all values. In summary, the objectives are focused on minimizing running time and energy consumption, considering both maximum values and aggregate totals for the AGVs. The objectives can be seen below:
\begin{itemize}
    \item G1: total running time of all AGVs 
    \item G2: maximum running time of all AGVs
    \item G3: total energy consumption of all AGVs
    \item G4: maximum energy consumption of all AGVs
\end{itemize}

\section{Literature review}
The decision problems regarding RMFSs can be categorized into strategic, tactical and operational levels. The strategic level constitutes storage area dimensioning \cite{lamballais2017estimating} and workstation placement \cite{lamballais2020inventory} problems. The tactical level covers number of robots \cite{otten2019lost,yuan2017bot} and battery management problem \cite{zou2018evaluating}. In this paper, we focus on the task allocation \cite{bolu2021adaptive, dou2015genetic}, path planning \cite{kumar2018development,zhang2018collision}, and priority rules \cite{lee2019smart,turhanlar2024autonomous} which are part of the operational level.
\subsection{Task assignment}
One subject that we focus in this paper is task allocation (TA) in which tasks are assigned to robots. The majority of papers within this subject exploit heuristic algorithms to address this problem. Bolu et al.\cite{bolu2021adaptive} group orders into batches, generates order tasks based on these batches, and subsequently assigns the tasks to robots for execution.
Dou et al. \cite{dou2015genetic} propose a simulated annealing algorithm to determine the task sequence and path selection for multiple automated vehicles. Gharehgozli et al. \cite{gharehgozli2020robot} use Adaptive Large Neighborhood Search (ALNS) for task scheduling. The results show that ALNS reduced travel time by up to 30-40\% compared to standard heuristics like nearest-neighbor and farthest-neighbor. Zhang et al. \cite{zhang2019building} propose a genetic algorithm for task allocation. The proposed model is named Resource-Constrained Project Scheduling Problem with Transfer Times (RCPSPTT). This algorithm addresses task precedence, resource limitations  and variable transfer times between tasks. Zhou et al. \cite{zhou2014balanced} focuses on a new mechanism for multi-robot task allocation in intelligent warehouses. The proposed method, the Balanced Heuristic Mechanism (BHM), is designed to optimize both travel and task distribution. The BHM improves upon existing allocation methods, such as auction-based and clustering methods, by prioritizing tasks based on proximity and robot workload. Lamballais et al. \cite{lamballais2022dynamic} discuss task allocation within RMFS that continuously reallocates robots and workstations between picking and replenishment tasks based on changing demand levels. The study models the RMFS as a combination of queuing network and a Markov decision process (MDP), aiming to allocate resources efficiently during peak and non-peak demand periods.
\subsection{Path planning}
Other subject that we cover in this paper is path planning, in which efficient routes for robots to retrieve and deliver pods are determined. Sun et al. \cite{sun2021autonomous} present a path planning approach in a RMFS that focuses on interference-free scheduling of robots on bidirectional warehouses. The authors propose a modified A* algorithm for path planning. This algorithm considers three path selection rules: minimizing distance, reducing the number of turns, and selecting paths randomly to maintain variety. These paths allow robots to find alternative routes and avoid conflicts, enhancing the performance of the system. Zhang et al. \cite{zhang2018collision} introduce a collision-free route planning method for AGVs in a RMFS. This approach classifies collisions into different types and proposes specific strategies to handle each type effectively. The method uses an improved Dijkstra’s algorithm to determine the shortest routes, detect potential collisions, and choose alternative routes or modify start times to avoid conflicts.
\subsection{Priority rules}
In cases where multiple vehicles are involved in a collision, establishing the priority order of the vehicles is essential. Many papers only use predetermined rules as priority rules. For example, one rule is that the AGV having the earliest arrival time has the highest priority \cite{lu2023automated,li2024reinforcement}. One other rule is to consider the status of the vehicle, meaning that the vehicle which is loaded has the higher priority. One type of rule tries to prioritize the AGVs based on their state. For example, the system assigns priority to AGVs based on task type, rack frequency, and the relevance of goods, creating an 11-level hierarchy to guide conflict resolution \cite{cai2021collaborative}. There are papers in which priority rules are extensively studied. Lee et al. \cite{lee2019smart} establish a priority rule for mobile robots in a warehouse system to manage conflicts and avoid collisions. Each robot's priority is calculated using a weighted formula based on its task assignment, rack-lifting status, estimated path time, and time already spent on the current task. Robots with higher priority levels are allowed to proceed on their planned paths, while those with lower priority may need to adjust, wait, or detour to resolve potential conflicts. This rule helps streamline task fulfillment and enhance operational efficiency by minimizing disruptions and optimizing robot paths in real-time. Demesure et al. \cite{demesure2017decentralized} compute the priority using an index called Individual Performance (IP). This index determines the priority of each robot based on its current state, such as its estimated travel time to a resource and how much it aligns with the system’s specifications. If a robot cannot satisfy the system’s specifications, the robot is given the highest priority (IP = 1), which allows it to schedule its product as soon as possible.

\subsection{Battery management problem}
The robots in RMFS are battery powered. Zou et al. \cite{zou2018evaluating} investigate battery management in RMFS, emphasizing its importance due to the impact of battery depletion on system performance and costs. Battery depletion reduces the availability of robots, hence decreasing the system's throughput capacity. The study investigates three primary battery recovery strategies: plug-in charging, inductive charging, and battery swapping. Each strategy has its own advantages and disadvantages. Inductive charging provides the best throughput time, as robots can charge while waiting at workstations, eliminating downtime for dedicated charging. Battery swapping allows robots to quickly replace depleted batteries, offering higher throughput than plug-in charging, though it increases costs. Plug-in charging, while slower in terms of throughput, often results in lower annual costs compared to battery swapping. In other research, Chen et al. \cite{chen2025role} evaluate three dynamic priority policies to optimize order handling and minimize energy use. They propose strategies like real-time data integration and optimal robot deployment. Additionally, it suggests an ideal layout for reduced travel distances, balancing operational demands with energy efficiency.

\subsection{Contributions and paper structure}
Contributions of this paper are listed as follows:
\begin{itemize}
    \item In this paper, a mixed integer formulation of the problem is provided. Then, it  solved for a small sized problem  and we prove that this is an NP-hard problem.
    
    \item This paper introduces a novel priority rule addressing collision avoidance, with a focus on minimizing the energy consumption of AGVs. 
    \item This paper proposes a heuristic multi-objective task allocation algorithm aiming to minimize possibility of collision and total traveled distance by AGVs.

    \item This paper evaluates the proposed algorithms and priority rule and compare them with existing methods via computational studies.
\end{itemize}
The rest of this paper is organized as follows: \ref{PF} provides the mathematical formulation
for the problem and an efficient approach to solve the small size problems. \ref{Md}
outlines the methodology used in this study. \ref{resul} begins by presenting the metrics
used to evaluate the algorithms and concludes by discussing the results. \label{MI} provides recommendations for managers based on the study’s findings. Finally, \ref{MI} summarizes the key results and conclusions of the paper.

\section{Problem formulation and mathematical model}\label{PF}
\subsection{Variable definition}
In this section, the mathematical formulation of the problem is presented. We provide a formulation for the problem of multiple AGVs considering time and energy consumption(MATEC). The notations in this formulation are defined as follows.   \begin{longtable}{>{\bfseries}p{3cm}p{12cm}}\caption{Sets and parameters}
\label{tab:S_P}\\
\multicolumn{2}{c}{\large\textbf{Sets and Parameters}} \\
\toprule
$A$         & Set of AGVs, $a \in A$ \\
$L$         & Set of locations $\lambda, \lambda \in L $ \\
$W$         & Set of workstations, $w \in W \subset L$ \\

$K$       & Set of task locations, $k \in K \subset L$ \\
$S$       & Set of loading status values where 0 means unloaded and 1 loaded, $S \in \{0,1\}$ \\
$T$         & Time horizon, $t \in \{1,2,...,T\}$ \\
$P$       & Set of arcs in space-time network in which AGV moves from location $\lambda$ to location $\lambda'$ with loading status $s$ at time $t$ , $(\lambda,\lambda',s,t) \in P$ \\
$P_{M}^{W}$       & Set of arcs leading to a workstations (the destination($\lambda'$) is a workstation while the source($\lambda$) is not. ), $P_{M}^{W} \subset P$ \\
$P_{M}^{O}$       & Set of arcs leading to storage locations(the destination($\lambda'$) is a storage location while the source($\lambda$) is not. ), $P_{M}^{O} \subset P$ \\
$P_{W}^{W}$       & Set of waiting arcs in workstations(the destination($\lambda'$) and the source($\lambda$) are same and it is a workstation. ), $P_{W}^{W} \subset P$ \\
$P_{W}^{O}$       & Set of waiting arcs in storage locations(the destination($\lambda'$) and the source($\lambda$) are same and it is a storage location. ), $P_{W}^{O} \subset P$ \\

$\tau_w$         & Time for unloading at workstations \\
$\tau_p$         & Time for loading at storage locations \\

$I$       & Set of initial locations of AGVs\\
$I^{a}$       & Initial location of AGV $a$, $I^{a} \in I$\\

\midrule
\multicolumn{2}{c}{\large\textbf{Decision Variables}} \\
\midrule
$y_{a,\lambda,\lambda',s,t}$        & 1 if AGV $a$ moves from location $\lambda$ to location $\lambda'$ with loading status $s$ at time $t$ and $0$ otherwise \\
$r_k^{a}$ & $1$ if AGV $a$ is assigned to task $k$ and $0$ otherwise\\
$b_{(a,\lambda,\lambda',s,t), k}$     & $1$ if arc $(a,\lambda,\lambda',s,t)$ is associated with task $k$ and $0$ otherwise\\
$S_{k_1,k_2}^a$  & $1$ if AGV $a$ processes task $k_1$ immediately before $k_2$ $0$ otherwise \\
$x^{s}_{k}$       & Time that an AGV enters task location $k$ to lift the pod  \\
$x^{b}_{k}$       & Time that an AGV enters task location $k$ to return the unloaded pod\\
\bottomrule
\end{longtable}
\subsection{Model establishment}
There are four different objective functions that are considered in this paper, as mentioned in section \ref{PS}. Their formulations are as follows:

\begin{itemize}
    \item Minimizing total running time of all AGVs 
    \begin{equation}
\min \left( \sum_{(\lambda, \lambda', s, t) \in \mathcal{P}}  y_{a, \lambda, \lambda', s, t} \cdot (t+1) \right)
\label{eq:objective1}
\end{equation}
    
    \item  Minimizing maximum running time of all AGVs 
    \begin{equation}
\min \left( \max_{a \in \mathcal{A}} \sum_{(\lambda,\lambda',s,t) \in \mathcal{P}} y_{a,\lambda,\lambda',s,t} \cdot (t+1) \right)
\label{eq:objective2}
\end{equation}
    \item Minimizing total energy consumption of all AGVs
    \begin{equation}
\min \left( \sum_{(\lambda, \lambda', s, t) \in \mathcal{P}} y_{a, \lambda, \lambda', s, t} \cdot \mathbf{1}_E(a) \right)
\label{eq:objective3}
\end{equation}
    \item Minimizing maximum energy consumption of all AGVs
    \begin{equation}
\min \left( \max_{a \in \mathcal{A}} \sum_{(\lambda, \lambda', s, t) \in \mathcal{P}} y_{a, \lambda, \lambda', s, t} \cdot \mathbf{1}_E(a) \right)
\label{eq:objective4}
\end{equation}
\end{itemize}

The  function $\mathbf{1}_E(a)$ in equations \ref{eq:objective3} and \ref{eq:objective4} determines the energy consumption of AGVs based on the their loading status. The function can be seen below.  
\[
\mathbf{1}_E(a) =
\begin{cases}
E_L, & \text{if } a \text{ is loaded} \\
E_{UL}, & \text{if } a \text{ is not loaded}
\end{cases}
\]

The constraints of the problem can be seen below. 
\begin{align}
&\displaystyle \sum_{a \in\mathcal{A} } r_{k}^{a} = 1 , \forall\, k \in\mathcal{K} \label{eq:25} \\
&\displaystyle \sum_{a \in\mathcal{A} } \sum_{k_1 \in\mathcal{K} \cup \mathcal{I}^{a}, k_1 \neq k_2}  S_{k_1,k_2}^{a} = 1   , \quad \forall\, k_2 \in\mathcal{K} \label{eq:28} \\
&\displaystyle \sum_{a \in\mathcal{A} } \sum_{k_2 \in\mathcal{K} \cup \mathcal{I}^{a}, k_2 \neq k_1}  S_{k_1,k_2}^{a} = 1   , \quad \forall\, k_1 \in\mathcal{K} \label{eq:29} \\
&\displaystyle \sum_{a \in\mathcal{A} } S_{k_1,k_2}^{a} \leq r_{k_1}^{a}  , \forall\, k_1,k_2 \in\mathcal{K} \label{eq:26} \\
&\displaystyle \sum_{a \in\mathcal{A} } S_{k_1,k_2}^{a} \leq r_{k_2}^{a}  , \forall\, k_1,k_2 \in\mathcal{K} \label{eq:27} \\
&\displaystyle \sum_{ ({I}^a,\lambda',0,0) \in \mathcal{P} } y_{a,{I}^a,\lambda',0,0} = 1 , \forall\, a\in\mathcal{A} \label{eq:2} \\
& \displaystyle \sum_{ (\lambda,k,0,t) \in\mathcal{P_{M}^{O}}} y_{a,\lambda,k,0,t} = 1 ,\quad \forall\, k \in \mathcal{K}, \forall\, a \in\mathcal{A} \label{eq:3}  \\
& \displaystyle \sum_{ (\lambda,k,1,t) \in\mathcal{P_{M}^{O}}} y_{a,\lambda,k,1,t} = 1 ,\quad \forall\, k \in \mathcal{K}, \forall\, a \in\mathcal{A} \label{eq:4}\\
& \displaystyle \sum_{ \lambda ,\lambda',s,t \in\mathcal{P}} y_{a,\lambda,\lambda',s,t} = 1 , \forall\, a\in\mathcal{A} , \forall\, t \in\mathcal{T} \label{eq:6}\\
& \displaystyle  y_{a,\lambda,\lambda',s,t} \leq \sum_{ \lambda'',\lambda,|s-1|,t-1 \in\mathcal{P}} y_{a,\lambda'',\lambda,|s-1|,t-1} ,\quad \forall\, a\in\mathcal{A} , \forall\ s \in \mathcal{S}, \forall\ (\lambda,\lambda',s,t) \in \mathcal{P_M^{W}}, \forall\ t,t-1 \in \mathcal{T}   \label{eq:7}\\ 
& \displaystyle  y_{a,\lambda,\lambda',s,t} \leq \sum_{ \lambda'',\lambda,0,t-1 \in\mathcal{P}} y_{a,\lambda'',\lambda,0,t-1} ,\quad \forall\, a\in\mathcal{A} , \forall\ \lambda,\lambda',s,t \in \mathcal{P_M^{O}}, \forall\ t,t-1 \in \mathcal{T} \label{eq:8}\\
& \displaystyle  y_{\lambda,\lambda' ,l,s,t} \leq \sum_{ \lambda'',\lambda,s,t-1 \in\mathcal{P}} y_{a,\lambda'',\lambda,s,t-1} ,\quad \forall\, a\in\mathcal{A} , \forall\ \lambda,\lambda',s,t \in \mathcal{P}, \forall\ t,t-1 \in \mathcal{T}  \label{eq:9}\\
& \displaystyle \sum_{ a\in\mathcal{A} }\sum_{t= t_1}^{t_1 + \tau_w} \sum_{ (\lambda,w,1,t)\in\mathcal{P_M^{W}} } y_{a,\lambda,w,1,t} \leq 1 ,  \forall\, t_1 \in\mathcal{T} , \quad \forall\, w \in \mathcal{W} \label{eq:10}\\
& \sum_{t= t_1}^{t_1 + \tau_w} \sum_{ (w,w,s,t)\in\mathcal{P_W^{W}} } y_{a,w,w,s,t} \geq \tau_{w} \cdot y_{a,\lambda,w,s',t-1},  \quad \forall\, t_1 \in\mathcal{T} , \forall\,a\in\mathcal{A},\forall\ w \in W, \forall\ (w,w,s',t-1) \in \mathcal{P_{M}^{W}} \label{eq:11}\\
& \sum_{t= t_1}^{t_1 + \tau_p} \sum_{ (k,k,s,t)\in\mathcal{P_W^{P}} } y_{a,k,k,s,t} \geq \tau_p \cdot y_{a,\lambda,k,s',t-1},  \quad \forall\, t_1 \in\mathcal{T} , \forall\,a\in\mathcal{A}, \forall\ (\lambda,k,s',t-1) \in \mathcal{P_{M}^{P}} \label{eq:12}\\
& x^{s}_{k} = \sum_{ t \in \mathcal{T}} \sum_{(\lambda,k,1,t) \in \mathcal{P_{M}^{O}}} y_{a,\lambda,k,1,t} \cdot (t+1) ,\quad \forall k \in \mathcal{K} \label{eq:14}\\
& x^{b}_{k} = \sum_{ t \in \mathcal{T}}  \sum_{(\lambda,k,0,t) \in \mathcal{P_{M}^{O}}} y_{a,\lambda,k,0,t} \cdot (t+1),\quad \forall k \in \mathcal{K} \label{eq:15}\\
& x^{b}_{k} \geq x^{s}_{k}, \quad \forall k \in \mathcal{K} \label{eq:16}\\
& x^{b}_{k_1} \leq x^{s}_{k_2} + M\cdot (1 - S_{k_1,k_2}^{a}), \quad \forall k_1,k_2 \in \mathcal{K}, \forall a \in \mathcal{A}\label{eq:36}\\
 &\sum_{ (\lambda,w,s,t) \in \mathcal{P_M^{W}}} b_{(a,\lambda,w,s,t), k} = r_{k}^{a}, \quad \forall k \in \mathcal{K},\forall\, a \in \mathcal{A} \label{eq:13}\\
 & y_{a,\lambda,w,s,t}\cdot (t+1) \geq (x^{s}_{k} - M (1 - b_{(a,\lambda,w,s,t) , k})), \quad \forall(\lambda,w,s,t) \in \mathcal{P_{M}^{W}}, \forall k \in \mathcal{K}, \forall a \in \mathcal{A} \label{eq:19}\\
 & y_{a,\lambda,w,s,t}\cdot (t+1) \leq (x^{b}_{k} + M (1 - b_{(a,\lambda,w,s,t)  , k})), \quad \forall (\lambda,w,s,t) \in \mathcal{P_{M}^{W}}, \forall k \in \mathcal{K}, \forall a \in \mathcal{A}\label{eq:20}\\
 &\sum_{(\lambda,\lambda',s,t) \in \mathcal{P}} y_{a,\lambda,\lambda',s,t}  + \sum_{(\lambda,\lambda',s',t') \in \mathcal{P}} y_{a',\lambda,\lambda',s',t'} \leq 1,\quad  \forall a,a' \in \mathcal{A}, a \neq a' \label{eq:17}\\
 &\sum_{(\lambda,\lambda',s,t) \in \mathcal{P}} y_{a,\lambda,\lambda',s,t} + \sum_{(\lambda,\lambda',s',t') \in \mathcal{P}} y_{a',\lambda,\lambda',s',t'} \leq 1,\quad  \forall a,a' \in \mathcal{A}, a \neq a' \label{eq:18}\\
 & y_{a,\lambda,\lambda',s,t} \in \{0,1\} \\
 & b_{(a,\lambda,\lambda',s,t), k} \in \{0,1\} \\
 & x^{s}_{k} \geq 0,\quad \forall k \in \mathcal{K} \label{eq:162}\\
 & x^{b}_{k} \geq 0,\quad \forall k \in \mathcal{K} \label{eq:163}\\
 & r_{k}^{a} \geq 0,\quad  \forall k \in \mathcal{K}, \forall a \in \mathcal{A} \label{eq:164}\\
 & S_{k_{1},k_{2}}^{a} \geq 0, \quad \forall k_{1},k_{2} \in \mathcal{K}, \forall a \in \mathcal{A} \label{eq:165}
\end{align}
In this formulation, constraints \ref{eq:25} to \ref{eq:27} determine the assignment of tasks to AGVs and also determine in which order these tasks are assigned to AGVs. Constraint \ref{eq:25}  ensures that each task must be allocated to exactly one AGV. Constraints \ref{eq:28} and \ref{eq:29} ensure that each task must come exactly after one task and before other task. These constraints determine the order of tasks for each AGV. Constraints \ref{eq:26} and \ref{eq:27} ensure that constraints \ref{eq:28} and \ref{eq:29} are focused on one AGV.

Constraint \ref{eq:2} ensures that each vehicle starts from the designated initial location. Constraints \ref{eq:3} and \ref{eq:4} are about stating that each task must be visited twice, once for loading the pod and once for returning unloaded pod. Constraint \ref{eq:6} ensures that each AGV at any time must be at one location.

Constraints \ref{eq:7}, \ref{eq:8}, and \ref{eq:9} ensure that an AGV can occupy a given arc at time \textit{t} only if it was located on a neighboring arc leading into it at time \textit{t}-1, thereby preserving the temporal flow continuity across the space-time network. Constraint \ref{eq:7} focuses on the nodes leading to workstations. When an AGV enters a workstation, the loading status of that AGV will change. Constraint \ref{eq:8} focuses on the nodes leading to task pods. AGVs are always empty before entering task locations. 
Constraint \ref{eq:9} focuses on the task pods. For these arcs, the status of AGVs does not change.

Constraint \ref{eq:10} ensures that when an AGV receives service in workstation, no other AGV can not enter the workstation. Constraint \ref{eq:11} ensures that when an AGV enters a workstation for service, it has to stay in that workstation until it receives the service it needs. 

Constraint \ref{eq:12} ensures that when an AGV enters a task pod for service, it has to stay in that location until it receives the service. Constraint \ref{eq:14} finds the time that AGVs enter task pods for lifting them. Constraint \ref{eq:15} finds the time that AGVs enter task pods for returning unloaded pods. Constraint \ref{eq:16} ensures that the time of returning the unloaded pod is after the time that the pod is picked up. 

Constraint \ref{eq:36} ensures that any AGV, after returning the unloaded pod to its location, will begin traveling to the next task in its list. 

Constraints \ref{eq:13},\ref{eq:19} and \ref{eq:20} ensure that after an AGV load a task pod, it must visit a workstation and receive service and then it can return the unloaded pod to its first place.

 Constraints \ref{eq:17} and \ref{eq:18} ensure that there will be no collisions between AGVs in the system. Constraint \ref{eq:17} guarantees that no two AGVs occupy the same node at time any time $t$. Constraint \ref{eq:18} guarantees that no two vehicles have head-to-head collision in the system.

Constraints \ref{eq:162} to \ref{eq:165} are sign restrictions for the decision variables.

\begin{proposition}
\label{prop:nphard}
Problem MATEC is NP-hard.
\end{proposition}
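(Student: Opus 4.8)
The plan is to establish NP-hardness by exhibiting a polynomial-time reduction from a known NP-hard problem into a suitably restricted special case of MATEC; since hardness of a special case implies hardness of the general problem, and since optimizing a single one of the objectives $G1$--$G4$ already captures the combinatorial core, it suffices to show that one such restriction is NP-hard. The structural hint comes from constraints \eqref{eq:28} and \eqref{eq:29} together with the variables $S_{k_1,k_2}^{a}$: they force each task to have exactly one immediate predecessor and one immediate successor among the tasks and the initial depot, which is precisely the successor--assignment encoding of a Hamiltonian path / travelling salesman tour. I would therefore reduce from the Travelling Salesman Problem, and more specifically from Hamiltonian Path in grid graphs (known to be NP-complete), since the warehouse is itself a grid and this lets the source instance be embedded without distorting distances.

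First I would fix the special case: a single AGV ($|\mathcal{A}|=1$), uniform energy ($E_L=E_{UL}$) so that the energy objectives $G3$ and $G4$ become constant multiples of the number of traversed arcs and introduce no new structure, a single workstation, and the total-running-time objective $G1$. With one AGV the collision constraints \eqref{eq:17}--\eqref{eq:18} are vacuous, so the only remaining decisions are the order in which tasks are served and the routes between them. Next I would show that the objective decomposes: for each task the AGV must travel to the pod, load (time $\tau_p$), carry it to the workstation, unload (time $\tau_w$), and return the pod, contributing a per-task amount that is independent of the order; the only order-dependent term is the sum of shortest-path travel times between the drop location of one task and the pickup location of the next, plus the initial leg from the depot. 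Hence minimizing $G1$ is equivalent to finding a minimum-length Hamiltonian path through the task locations starting at the depot.

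Given a grid-graph instance of Hamiltonian Path, I would then build the MATEC instance by taking the warehouse grid to coincide with the host grid, placing one task at each vertex that must be visited, putting the depot and workstation at designated cells, and scaling the time horizon $T$ large enough to admit any route. A Hamiltonian path of the source exists (equivalently, a tour of total inter-task distance at most a threshold $B$ exists) if and only if the constructed MATEC instance admits a feasible schedule of total running time at most $B$ plus the order-independent constant computed above. Both directions follow from the decomposition, and the construction has size polynomial in the source instance, which completes the reduction.

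The main obstacle I anticipate is the faithful geometric embedding rather than the combinatorial argument: I must ensure that the grid's shortest-path (Manhattan-type, aisle-constrained) distances between cells reproduce exactly the distances of the source instance, that the forced pickup--workstation--return cycles and the service times $\tau_p,\tau_w$ contribute only an order-independent constant (which is why routing all tasks through a single common workstation is convenient), and that no spurious feasibility is introduced by the time-expansion of the network or by the arc-partition sets $\mathcal{P}_{M}^{W},\mathcal{P}_{M}^{O},\mathcal{P}_{W}^{W},\mathcal{P}_{W}^{O}$. As a complementary route that instead stresses the task-assignment dimension, one could reduce from \textsc{Partition} using two AGVs and the makespan objective $G2$: tasks with prescribed durations placed far enough apart to avoid interaction make balancing the two AGVs' completion times equivalent to splitting the multiset into two equal-weight halves, showing that even the assignment sub-problem, independent of routing, is already NP-hard.
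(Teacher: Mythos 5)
Your main argument is correct, but it takes a genuinely different route from the paper's. The paper proves hardness by relaxation to a Multi-Depot TSP: it drops the workstation-capacity constraints \eqref{eq:10}--\eqref{eq:11} and the collision constraints \eqref{eq:17}--\eqref{eq:18}, observes that each AGV then acts as a salesman starting from its own depot with a fixed pod-to-workstation round-trip cost attached to every task, and invokes the known NP-hardness of MDTSP, of which MATEC is a generalization. You instead restrict to a single AGV and a single workstation (so the collision and workstation constraints become vacuous rather than relaxed), decompose $G1$ into an order-independent per-task constant plus the order-dependent inter-task travel, and reduce from Hamiltonian Path in grid graphs. The trade-off is instructive: the paper's argument is shorter and keeps the multi-agent flavor visible, but it silently assumes that hard MDTSP instances (whose hardness is usually stated for arbitrary cost matrices) can be realized by warehouse shortest-path distances, which are constrained grid metrics; your choice of grid-graph Hamiltonicity addresses precisely this embedding obstacle, since the source instance lives on the same lattice as the warehouse and adjacency corresponds exactly to inter-task distance $1$, giving the standard threshold $B = n-1$ equivalence. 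In that sense your reduction is the more self-contained and rigorous of the two. One caveat: your complementary \textsc{Partition} route is weaker than you suggest, because realizing prescribed durations $a_i$ as travel times in a grid requires warehouse dimensions proportional to the $a_i$, i.e., a pseudo-polynomial construction; since \textsc{Partition} is only weakly NP-hard, that variant would not yield a polynomial reduction for binary-encoded inputs, so the grid-Hamiltonian-path argument should remain your main proof.
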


\begin{proof}
By relaxing two sets of constraints—(i) each workstation can serve only one pod within a limited time window (constraints \ref{eq:10} and \ref{eq:11}), and (ii) automated guided vehicles (AGVs) must avoid collisions (constraints \ref{eq:17} and \ref{eq:18})—the problem reduces to a simpler form. In this simplified version, each AGV starts from its initial location, visits a subset of tasks assigned to it, transports the associated pod to a nearby workstation for service, returns the pod to its original location, and finally returns to its depot. The travel costs between AGVs and tasks, as well as between tasks, are known. Moreover, each task incurs a fixed cost corresponding to the round-trip movement between the pod’s location and the closest workstation.

This formulation corresponds to a Multi-Depot Traveling Salesman Problem (MDTSP), where AGVs serve as salesmen, each originating from a unique depot, and tasks act as customer nodes. The MDTSP is a well-known NP-hard problem. Hence, the original problem MATEC, being a generalization of this simplified version, is also NP-hard.
\end{proof}

Although solving the full problem optimally is computationally intractable for large instances, small-scale problems can be addressed by leveraging the MDTSP structure. Specifically, for a warehouse of size $10 \times 10$ meters with 5 tasks and 3 AGVs (as shown in Figure~\ref{MDTSP_fig}), an exhaustive enumeration strategy can be used.

\begin{figure}[H]
\centering
\includegraphics[width=5in]{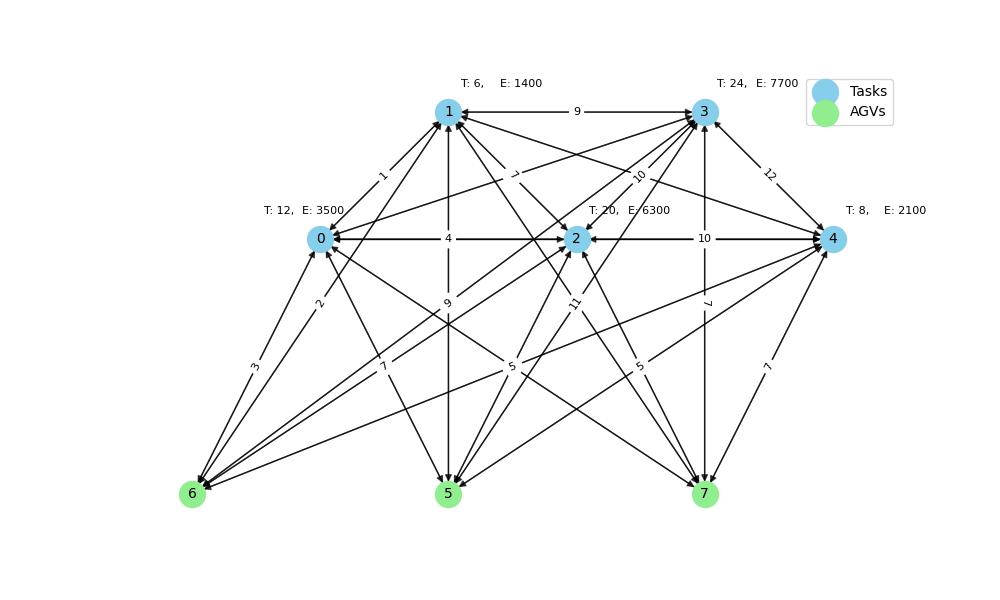}
\caption{MDTSP representation: Nodes 0–4 represent tasks. 'E' and 'T' denote the fixed energy and time costs of moving each pod to and from the workstation. Nodes 6–8 correspond to AGV depots. Arc weights represent travel time only, for clarity.}
\label{MDTSP_fig}
\end{figure}

For this instance, there are 720 feasible task-to-AGV assignment permutations, assuming that each AGV is assigned at least one task. Among these, 22 solutions are non-dominated with respect to the multiple objective criteria considered, thus forming the initial Pareto front.

Each scenario in the Pareto front is then evaluated using the full version of the Problem, with task assignments and their service sequences fixed. This allows assessment of the true (non-relaxed) performance. The results from these 22 instances serve as a reference set.

Next, the full set of 720 MDTSP solutions is screened to identify additional scenarios that are not dominated by the 22 reference solutions. A total of 42 such scenarios are identified. These are then evaluated by solving the full version of the Problem, with their task sequences fixed as in the MDTSP solution. In this case, the $r_{k}^{a}$ variables are set according to the scenario.

Combining these 42 solutions with the initial 22 yields a set of 64 candidate solutions. Upon Pareto filtering this final set, 16 scenarios remain that are non-dominated under the original problem's objectives. These represent the final Pareto front and provide a comprehensive set of trade-off solutions for the small-scale problem instance.    

\begin{algorithm}[H]
\caption{Pareto Front Identification via Enumerative MDTSP-based Refinement}
\label{alg:pareto_identification}

\textbf{Input:} Set of tasks $\mathcal{K}$, set of AGVs $\mathcal{A}$, warehouse layout \\
\textbf{Output:} Final Pareto front of non-dominated solutions $\mathcal{P}_{\text{final}}$

\begin{algorithmic}[1]
\State Compute pairwise travel times between all nodes (AGVs, tasks, workstations)
\State Compute fixed cost for each task (e.g., pod movement to and from workstation)
\State Generate all valid task-to-AGV assignment permutations where each AGV has at least one task
\State Let $\mathcal{S}$ be the set of all such feasible assignment scenarios

\State Initialize empty list of costs: $C = \{\}$

\For{each scenario $s \in \mathcal{S}$}
    \State Assign tasks to AGVs according to scenario $s$
    \State Determine sequence of tasks per AGV
    \State Compute total cost of scenario $s$ using MDTSP formulation (travel + fixed costs)
    \State Store cost and solution for $s$ in $C$
\EndFor

\State Identify non-dominated scenarios from $C$ and store them as $\mathcal{P}_{\text{init}}$

\State Initialize empty set $\mathcal{E}_{\text{init}}$
\For{each scenario $s \in \mathcal{P}_{\text{init}}$}
    \State Fix task assignment and order
    \State Solve full version of Problem MATEC for scenario $s$
    \State Store result in $\mathcal{E}_{\text{init}}$
\EndFor

\State Compare all scenarios in $\mathcal{S}$ to $\mathcal{P}_{\text{init}}$
\State Identify additional scenarios that are not dominated $\rightarrow \mathcal{C}_{\text{add}}$

\State Initialize empty set $\mathcal{E}_{\text{add}}$
\For{each scenario $c \in \mathcal{C}_{\text{add}}$}
    \State Fix task assignment and order
    \State Solve full version of Problem MATEC for scenario $c$
    \State Store result in $\mathcal{E}_{\text{add}}$
\EndFor

\State Combine $\mathcal{E}_{\text{init}}$ and $\mathcal{E}_{\text{add}}$
\State Identify non-dominated solutions in combined set $\rightarrow \mathcal{P}_{\text{final}}$
\State \Return Final Pareto front $\mathcal{P}_{\text{final}}$
\end{algorithmic}
\end{algorithm}
\section{Methodology}\label{Md}
\subsection{Path routing}
One of the main issue in any RMFS system is path routing. It is crucial to determine the best path that each robot can take to reach its destination. The popular algorithms for regarding this matter are Dijkstra's and A* algorithms. Dijkstra's algorithm uses a breadth-first approach to find the shortest path from a given starting node to all other nodes in a weighted, directed graph. The A* algorithm is a heuristic algorithm based on Dijkstra's algorithm but computes the shortest tree between two given nodes (as opposed to all potential destination nodes). Hence, A* has better performance than Dijkstra's algorithm. In this paper we propose a modified A* algorithm for the path routing.

\begin{figure}[H]
\centering
    \includegraphics[width= 5 in]{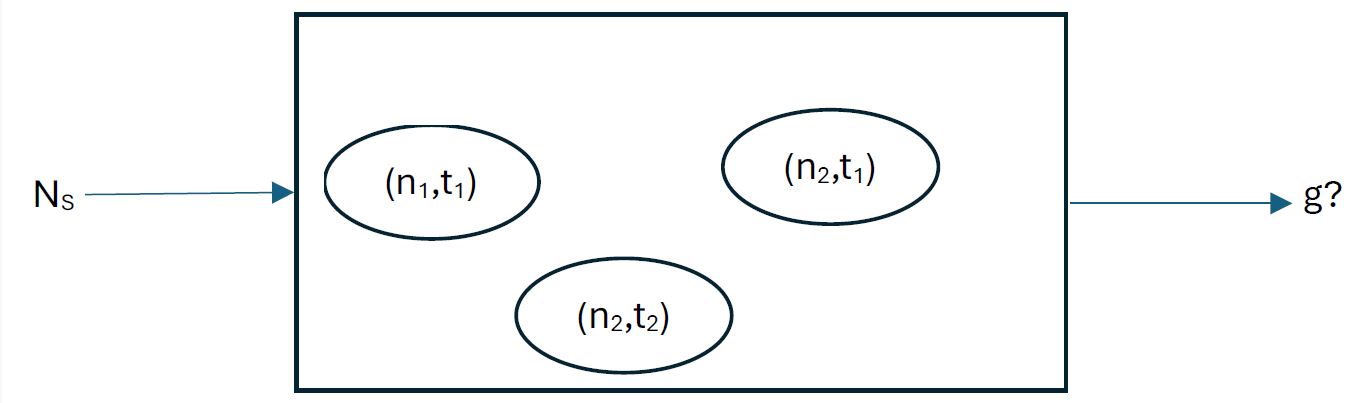}
\caption{A* algorithm }
\label{A* representation} 
\end{figure}

\subsubsection{Modified A* algorithm}
A* algorithm evaluates each candidate node n and time t combination (\(n,t\)) using a cost function, \(f(n,t)\), defined as:
\begin{equation}\label{equal:1}
f((n,t)) = g((n,t)) + h((n,t))    
\end{equation}
where \(g(n,t)\) is the cost of the path from starting point to node \((n,t)\), and \(h(n,t)\) is a heuristic estimate of the cost from node \((n,t)\) to the goal location. In node \((n,t\)), \(n\) is the location of the node and \(t\) is time of the node. In this paper, the purpose is minimizing the running time of vehicles. Hence, \(g(n,t)\) and \(h(n,t)\) are defined as:
\begin{equation}\label{equal:2}
g((n,t)) = t - ST 
\end{equation}
\begin{equation}\label{equal:3}
h((n,t)) = Manhattan\_distance(n,N_D.location)  
\end{equation}
In equation \ref{equal:2}, \(ST\) is the time of the starting node at the beginning of the decision epoch. In equation \ref{equal:3}, \(n\) is the location of current node and \(N_D.location\) is the location of destination node.

The traditional A* algorithm can not consider a collision with other vehicles. Hence, it is not guaranteed to the find the best path in a RMFS while there is more than one vehicle. The proposed algorithm can be seen in algorithm \ref{algo:a*}. In this algorithm, \(r_{s,t}\) is the space-time network in which all the paths of all vehicles are kept. In the proposed modified A* algorithm, the nodes have location and time. The implementation is similar to traditional A* algorithm, except the collision with other vehicles.  If a collision occurs outside an aisle, then that node will not be expanded. The other type of the collision occurs insides aisles. In these collisions, the penalty time is variable. In this paper, we calculate the exact time lost by the collision.

Considering collision inside aisles in algorithm \ref{algo:CRA} is discussed with an example. In figure \ref{fig:Cia}, \(V(A)\) and \(V(B)\) represent AGV \(A\) and \(B\), respectively. Also \(DES(B)\) represents the final destination of AGV \(B\). The arrows show the direction of the movement of the AGVs. The algorithm wants to find the best path for AGV \(A\). If a collision occurs inside the aisle, the vehicle has to take a step back. In figure \ref{col: T}, it can be seen that AGV \(A\) collides with AGV \(B\) at time \(T\), hence it must take a step back (figure \ref{col: T+1}). It is important to know that AGV \(A\) can collide with vehicle AGV \(B\) at time \(T+1\) since they are inside an aisle and the collision between them is an opposite conflict. Hence, AGV \(A\) must retreat further and further until either AGV \(B\) reaches its destination or leaves the aisle. In this example, the collision is resolved at time \(T+2\), since AGV \(B\) reaches its destination. Now, the lost time due to collision must be updated. In algorithm \ref{algo:CRA}, \(added\_time\) represents the lost time. In figure \ref{fig:Cia}, the \(added\_time\) is \(2\) seconds, since the difference time between the time in which two vehicles first experience collision and the time that collision is resolved is \(2\) seconds (\((T+2) - T = 2\)). This calculated lost time must be multiplied by two, because AGV \(A\) moves two seconds to reach AGV \(B\) and the same time to get back to that node. Finally, the node with the updated time is added to the algorithm, as seen in line 6-9 of algorithm \ref{algo:CRA}. If a collision occurs with a new AGV inside an aisle, like AGV \(C\) in figure \ref{fig:Cia}, then a deadlock has occurred. To address this collision, a new path is computed after waiting for \(added\_time\times2\). In this case, the total running of the new path is compared with the first path and the path with lower time is chosen.   

\begin{figure}[H]
     \captionsetup{justification=raggedright,singlelinecheck=false}
     \begin{subfigure}[b]{0.5\textwidth}
         \includegraphics[width=\textwidth,left]{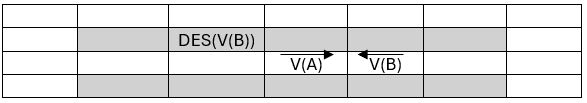}
         \caption{time T }
         \label{col: T}
     \end{subfigure}
     \hfill
     \begin{subfigure}[b]{0.5\textwidth}
         \includegraphics[width=\textwidth,left]{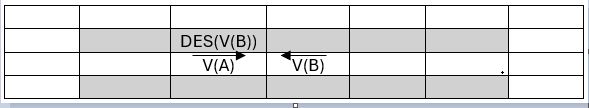}
         \caption{time T+1}
         \label{col: T+1}
     \end{subfigure}
     \hfill
     \begin{subfigure}[b]{0.5\textwidth}
         \includegraphics[width=\textwidth,left]{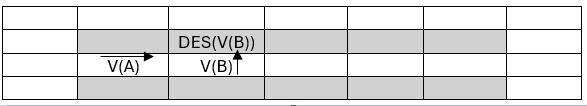}
         \caption{time T+2}
         \label{col: T+2}
     \end{subfigure}
        \caption{Collision inside aisle}
        \label{fig:Cia}
\end{figure}

\begin{algorithm}[H]
\caption{modified A* algorithm}\label{algo:a*}
\textbf{Input} \text{start node\( N_{S}\) , goal node location\( N_{D}.location\) , start time$ST$ },\\ \text{\hspace{32pt}space\_time network routes (\(r_{s,t}\))} \\
\textbf{Output} \text{updated space\_time network routes (\(r_{s,t}\))}

  \begin{algorithmic}[1]
    \State generate an empty set of expanded nodes; \(E = \{\}\)
    \State generate an empty set of closed nodes; \(CL = \{\}\)

    \State \text{current node }$= N_{S}$
    \State $camefrom = \{\}$
    
    \State \text{add current node to \(E\) }
    
    \State compute evaluation function of current node

    \While{current node.location $\neq N_{D}.location$ }
    \State $    Nei(cu) = $ \text{all the neighbors of the current node}
    \For{$j\in Nei(cu)$}
    \State \text{add the node $j$ to list $E$}
    \State $camefrom(j) =$ current node
    \State compute evaluation function of node $j$
    \EndFor

    \State \text{remove the current node from $E$}
    \State \text{add the current node to $CL$}
    \If { current node collide with any vehicle $V_j$ \textbf{and} collision is inside aisle }
    
    \State \text{$E= $   Implement algorithm \ref{algo:CRA}}

    \EndIf
    
    \State current node = \text{find the node in $E$ with minimum evaluation function}

    \EndWhile

  \end{algorithmic}
\end{algorithm}

\begin{algorithm}[H]
\caption{Collision Resolve algorithm}\label{algo:CRA}
\textbf{Input} \text{current node \((c\_n , c\_t)\), \(camefrom\) , evaluation function,} \\ \text{\hspace{32pt}expanded list\(E\)}  \\
\textbf{Output} \text{updated evaluation function, updated expanded list\(E\)} 

  \begin{algorithmic}[1]
    \State $added\_time = 0$
    \While{$True$}
    
    \State $added\_time = added\_time + 1$
    \State $(c\_n,c\_t) = camefrom((c\_n,c\_t))$
    
    \If { $c\_n$ not inside aisle  \textbf{or} no collision }
    \State $((c\_n,t^{'}_n)) = 
    (c\_n,c\_t + added\_time\times2)$
    
    \State \text{add $((c\_n,t^{'}_n))$ to set \(E\)}
    \State $f((c\_n,t^{'}_n)) = g((c\_n,t^{'}_n)) + h((c\_n,t^{'}_n))$

    \State break
    \EndIf
    \If{$c\_n$ collide with vehicle $V_k \neq V_j$}
    \State \text{Run a new algorithm after waiting for \(added\_time \times2\).}
    \EndIf

    \EndWhile

  \end{algorithmic}
\end{algorithm}

\subsection{Priority rule}
A priority rule is a guideline used to determine which object or vehicle has the right-of-way in situations where paths might intersect. To the best knowledge of authors, no research has been conducted yet to examine the impact of priority rules on running time of the AGVs and energy consumption. In this paper, we aim to determine priority for AGVs based on the minimizing energy consumption.

As discussed in the literature review, the majority of priority rules for AGVs are influenced by their arrival times and operational statuses. Collision avoidance strategies can be considered a means to minimize energy consumption and reduce the running time of AGVs. This paper addresses the collision avoidance problem with a specific focus on optimizing energy efficiency for AGVs.

The amount of energy that an AGV consumes is a factor of its load status. A loaded AGV uses 500 joules to travel one meter, while an unloaded AGV requires 200 joules for the same distance \cite{zou2018evaluating}.
The proposed priority rule is explained with an example. Assume two AGVs \(i\) and \(j\) collide with each other. There are two possible scenarios for these vehicles, either AGV \(i\) goes before \(j\) or AGV \(j\) goes before \(i\). In the first scenario, we implement the modified A* algorithm for AGV \(j\) with and without considering AGV \(i\). The difference between these paths are decisive factors. Hence, the algorithm calculates the difference  between these two paths first. Then, it calculates the difference between energy consumption. This is the total wasted energy consumption due to the collision. The same process is applied for the second scenario. Finally, we compare the total wasted energy of the two scenarios and choose the one with lowest value. In case that there is more than one scenario with lowest energy consumption, the scenario with lowest wasted time is chosen.

This priority rule algorithm is explained in algorithm \ref{algo:PRA}. In this algorithm, in the first line, all the permutation of AGVs are considered. For example, if three AGVs \(V_1,V_2,V_3\) are involved in the collision, then the set of all the possible priorities are \( \{(V_1,V_2,V_3) , (V_1,V_3,V_2) , (V_2,V_1,V_3) , (V_2,V_3,V_1) , (V_3,V_1,V_2) , (V_3,V_2,V_1)\}\), where in \((V_1,V_2,V_3)\), the AGV \(V_1\) has the highest priority, \(V_2\) has the second highest priority and \(V_3\) has the lowest priority. For each priority scenario, we apply the A* algorithm and compute the wasted energy consumption for each AGV in lines 6-10 of the algorithm \ref{algo:PRA}. Finally, we find the AGV with lowest energy consumption (lines 12-14 of algorithm \ref{algo:PRA}). It is important to mention that companies usually plan the AGVs so that not many of them have collisions inside aisles. In our implementations, we only observe a maximum of \(4\) vehicles experiencing collisions.    

\begin{algorithm}[H]
\caption{Priority rule algorithm}\label{algo:PRA}
\textbf{Input} \text{  List of all vehicles involved in the collision \((Lc) \),}\\ \text{\hspace{32pt} space\_time network routes (\(r_{s,t}\))} \\ 
\textbf{Output} \text{Best recommended priority ,} \\ \text{\hspace{41pt}updated space\_time network routes (\(r_{s,t}\)) } 

  \begin{algorithmic}[1]
  \State $P =   Perm(LC)$
  \State \text{best priority \(= None \)}
  \State \text{\(min\_val = \infty \)}
  
  \For{\(i \in P\)}
  \State \(sum\_energy = 0\)
  \For {\(j \in i\)}
  \State \text{Implement modified A* algorithm based on the priority \(i\)}
  \State \text{\(wasted\_time =\) wasted time for AGV \(j\) because of collision}
  \State 
\(sum\_energy\) =$ sum\_energy + \begin{cases}
200\times wasted\_time & \quad ,\text{not loaded} \\
500\times wasted\_time & \quad ,\text{loaded}
\end{cases}$

\EndFor
\State \If{\(sum\_energy < min\_val\)}
  \State best priority \(= i\)
  \State \(min\_val = sum\_energy\)
  \EndIf
  \EndFor

  \end{algorithmic}
\end{algorithm}

\subsection{Task assignment }
Task assignment plays a crucial role in the performance of any RMFS. The method used to allocate tasks to robots significantly impacts factors such as collision avoidance, energy consumption, and the total distance traveled by the robots.
The majority of the algorithms are based on the heuristic algorithms. These algorithms are usually based on neighborhood solution generation.
\subsubsection{Neighborhood solution generation}\label{NSG}
Neighborhood solution generation involves generating new candidate solutions that are in close proximity to an existing solution. This is achieved by making small, deliberate changes to the current solution. The core idea of the proposed task assignment algorithms is based on the idea of assigning tasks coming in the same aisle close to each other. By taking this idea, we can reduce the travel time of the robots and the possibility of collisions.
To implement this idea in generating neighbors, the list of tasks for each robot is represented by the number of aisle they belong to. For example, in Figure \ref{Task representation}, the aisles for the tasks assigned to  AGV \(V(A)\) are shown.   
\begin{figure}[h]
\centering
    \includegraphics[width= 3 in]{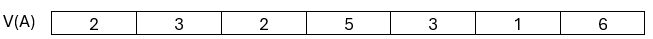}
\caption{Task representation}
\label{Task representation} 
\end{figure}

The specific operations for generating neighborhood solutions depend on the problem and the structure of the solution space. In this paper, swap, insertion and arrangement operations are implemented. These operations are discussed below.
The swap operation exchanges the positions of two elements in the solution. In this paper, swap operation is presented as follows. It begins by selecting two AGVs, \(A\) and \(B\). Next, it identifies the vehicle with the higher running time, denoted as \(A\). In AGV \(A\), it then locates the aisle with the highest task intensity and chooses a task assigned to vehicle \(B\) from that aisle. This task is swapped with a task from vehicle \(A\), ensuring that the selected task from vehicle \(A\) does not originate from the same high-intensity aisle.
Figure \ref{fig: swap} shows an example of swap operation. In this example, AGV \(A\) has a higher running time than vehicle \(B\). In AGV \(A\), we find the aisle with highest-intensity which is aisle \(1\). Then, we find the task in AGV \(B\) coming from same aisle, which is the first task of AGV \(B\). Then, we swap this task of AGV \(B\) with a random task of AGV \(A\) which is not coming from aisle \(1\)(first task of AGV \(A\)).

\begin{figure}[H]
     \captionsetup{justification=raggedright,singlelinecheck=false}
     \begin{subfigure}[b]{0.6\textwidth}
         \includegraphics[width=\textwidth,left]{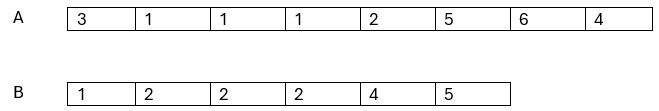}
         \caption{before swap }
         \label{fig: swap1}
     \end{subfigure}
     \hfill
     \begin{subfigure}[b]{0.6\textwidth}
         \includegraphics[width=\textwidth,left]{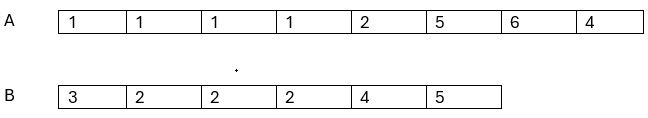}
         \caption{after swap}
         \label{fig:swap2}
     \end{subfigure}
     \hfill
        \caption{Swap operation}
        \label{fig: swap}
\end{figure}
The insertion operation removes an element from one position and inserts it into another. In this paper, insertion operation is presented as follows. It starts by selecting two AGVs, \(A\) and \(B\). It then identifies the AGV with the greater running time, designated as \(A\). Next, it determines the aisle with the highest task intensity assigned to vehicle \(B\) and selects a corresponding task from that same aisle in AGV \(A\). Finally, this task is removed from AGV \(A\) and inserted into a random location in AGV \(B\).
Figure \ref{fig: insertion} shows an example of insertion operation. In this example, AGV \(B\) has a lower running time than vehicle \(A\). In AGV \(B\), we find the aisle with highest-intensity which is aisle \(2\). Then, we find the task in AGV \(A\) coming from same aisle, which is the fifth task of AGV \(A\). Then, we remove this task from AGV \(A\) and insert it to the AGV \(B\).

\begin{figure}[H]
     \captionsetup{justification=raggedright,singlelinecheck=false}
     \begin{subfigure}[b]{0.6\textwidth}
         \includegraphics[width=\textwidth,left]{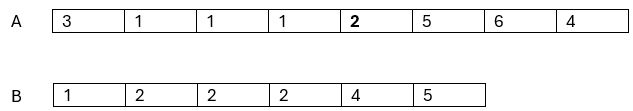}
         \caption{before insertion }
         \label{fig: insertion1}
     \end{subfigure}
     \hfill
     \begin{subfigure}[b]{0.6\textwidth}
         \includegraphics[width=\textwidth,left]{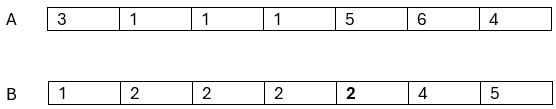}
         \caption{after insertion}
         \label{fig:insertion2}
     \end{subfigure}
     \hfill
        \caption{Insertion operation}
        \label{fig: insertion}
\end{figure}

In the arrangement operation, tasks assigned to an AGV are reorganized such that tasks originating from the same aisle are positioned consecutively. An example of the arrangement operation is seen in figure \ref{fig: arrangement}.

\begin{figure}[H]
     \captionsetup{justification=raggedright,singlelinecheck=false}
     \begin{subfigure}[b]{0.60\textwidth}
         \includegraphics[width=\textwidth,left]{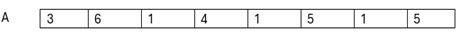}
         \caption{before arrangement }
         \label{fig: pre_arr}
     \end{subfigure}
     \hfill
     \begin{subfigure}[b]{0.6\textwidth}
         \includegraphics[width=\textwidth,left]{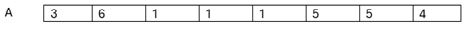}
         \caption{after arrangement}
         \label{fig: arr}
     \end{subfigure}
     \hfill
        \caption{Arrangement operation}
        \label{fig: arrangement}
\end{figure}

\subsection{Non-dominated sorting genetic algorithm}\label{NSGA}
Non-dominated sorting genetic algorithm (NSGA-II) is a popular method for solving multi-objective optimization problems \cite{zhou2021multi, shen2024nsga, keung2023cyber}. This algorithm starts with \(n\) initial solutions. The algorithm uses non-dominated sorting and crowding distance to rank solutions. Non-dominated sorting organizes solutions into different groups based on Pareto dominance. The first front includes solutions that are not outperformed by any others in the population, while later fronts contain solutions that are only outperformed by those in the earlier fronts. Crowding distance is used to evaluate solutions inside a front. Crowding distance calculates the proximity of a solution to other solutions in the front based on objective values, promoting diversity among solutions. Those with higher crowding distances are favored to achieve a broad range distribution along the front. After evaluating solutions, \(m\) (\(m<n\)) of the solutions are chosen to produce offspring. In this step, methods like crossover are applied to achieve this goal. The cycle continues until it reaches the termination criteria.

In this paper, we implement the NSGA-II algorithm with operations described in section \ref{NSG} to improve the proposed approach. Specifically, during the offspring generation phase, these operations are combined with genetic algorithm techniques to produce new offspring. This strategy aids the algorithm in directing its search more effectively to discover better solutions. This algorithm can be seen in algorithm \ref{algo:NSGA}. In the proposed NSGA-II algorithm, After selecting best \(m\) of the solutions, half of them are chosen for generating new offspring based on the crossover operations in genetic algorithms. Other half of the solutions are chosen for generating new offspring based on the proposed operations like swap, insertion and arrangement operations.    

In the genetic algorithm, each solution is represented as a chromosome. This chromosome has two sections. The first section shows the tasks of the solution and second section contains the number of tasks assigned to each AGV in that solution. A sample chromosome can be seen in figure \ref{ch_rep}.

\begin{figure}[H]
\centering
    \includegraphics[width= 4 in]{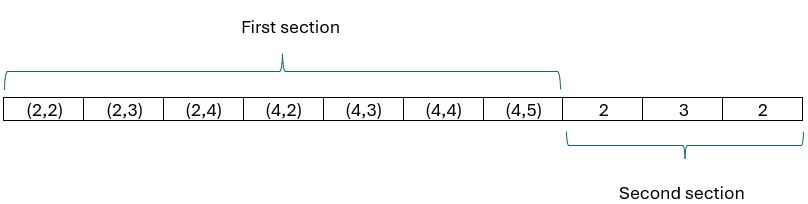}
\caption{Chromosome representation }
\label{ch_rep} 
\end{figure}
In this figure, the first section represents the location of each task and the second section shows number of tasks for each AGV. In this example, there are three AGVs in the system in which first two tasks belong to first AGV, the next three tasks belong to second AGV and the last two tasks belong to third AGV. After selecting parents, two different crossover techniques are implemented for generating offsprings. 

The first crossover technique is partially matched crossover (PMX) which is implemented on the first section of the chromosomes. It is a technique to mix the parents with each other while keeping the orders valid. It exchanges a segment of the sequence between parents and resolves any duplicate values through a mapping process.

The second crossover technique is arithmetic crossover (ARC) which is implemented on the second section of the chromosome. This technique is used for determining number of tasks for each AGV. ARC algorithm can be seen in equation \ref{equal:ARC}.

\begin{equation}\label{equal:ARC}
X_{i}^k = \floor{\alpha\times X_{i}^k + (1-\alpha)\times X_{j}^k} 
\end{equation}
In equation \ref{equal:ARC}, \(X_{i}^{k}\) and \(X_{j}^{k}\) represent the gene values at position \(k\) of chromosomes \(i\) and \(j\), respectively. The parameter \(\alpha\) is a weighted value that ranges between 0 and 1.

The ARC algorithm is applied on the second section of the parents' chromosomes to determine the number of tasks assigned to each child based on the tasks from the parent solutions. If the total number of tasks assigned to the children exceeds the total available tasks, the vehicle with the highest number of tasks is identified, and its task count is reduced by one. This process is repeated until the total tasks assigned to the children match the available tasks. Conversely, if the total number of tasks assigned to the children is less than the total available tasks, the vehicle with the fewest tasks is identified, and its task count is increased by one. This process continues until the total tasks assigned to the children equal the available tasks.

\begin{algorithm}[H]
\caption{NSGA algorithm}\label{algo:NSGA}
\textbf{Input} \text{ List of tasks(\(Tasks\)), Number of AGVs (\(N_{AGV}\)) }\\ \text{\hspace{32pt} Total number of running for algorithms($T\_R$) ,}\\ \text{\hspace{32pt} Population size $(P\_S)$ }  \\ 
\textbf{Output} \text{Best recommended set of task assignments }  

  \begin{algorithmic}[1]
  \State $I\_S = $  \text{Randomly assign set of tasks to AGVs }
  
  \State $i = 0$
  \While{\(i < T\_R\)}
  \State \parbox[t]{.6\linewidth}{Finding set of goals $(G1,G2,G3,G4)$ for $I\_S$}
  
  \State \parbox[t]{.6\linewidth}{Implementing Non-dominating sorting and crowding distance on the set of goals  }
  \State \text{Choosing \(m(m < n)\) best of solutions}
  \State \text{$m_1 = $} \parbox[t]{.6\linewidth} {Choosing $   \floor{m/2}$ of \(m\) and applying partially matched crossover to 
  generate offspring }

  \State \text{$m_2 = $ } \parbox[t]{.6\linewidth}{Choosing solutions not chosen in line 7 and implementing operations of \ref{NSG} on them to generate offspring}
  \State \text{$I\_S = m_1 \cup m_2$}
  \State $i = i+1$

  \EndWhile
  \State \textbf{Return } \text{$I\_S$}

  \end{algorithmic}
\end{algorithm}

\subsection{Adaptive large neighborhood search}
Large neighborhood search (LNS) is an evolutionary algorithm which is iteratively used for improving solutions. This algorithm consists mainly of destroy and rebuild operations. In destroy, a part of the solution is destroyed. Then, in rebuild, that part is rebuilt so that it leads to a new solution. This method enables the algorithm to escape from local optima and investigate a wider range of the solution space. This algorithm has been very popular and has been implemented in assignment problem\cite{fathollahi2023efficient}, routing problem\cite{le2023small}, and task planning\cite{wang2024hybrid}.

Adaptive large neighborhood search (ALNS) is an extension of LNS. In this algorithm, the operations which are more suitable for the problem have higher priority than other operation. In this way, the algorithm can be guided to find the better solutions than usual LNS algorithms. ALNS has been widely used in different areas like vehicle routing problem\cite{dumez2021large}, travelling salesman problem\cite{smith2017glns}, and share-a-ride problem\cite{li2016adaptive}.

In this paper, an ALNS algorithm is proposed. This is achieved by integrating LNS with operation methods introduced in \ref{NSG}. In this algorithm, two sets of operations are used for generating neighbor solutions: methods in \ref{NSG} and random solution generation methods.
Random solution methods, such as swap, insertion, and arrangement operations, are implemented without predefined guidelines, relying on randomization to generate solutions rather than following the structured approaches detailed in Section \ref{NSG}. 
In the next step, an adaptive mechanism is implemented to find the best solution. The adaptive mechanism combines various proposed operations with a probabilistic selection approach, like roulette wheel selection, to dynamically enhance the search process. This approach increases the possibility of selecting operations that have demonstrated strong performance, maximizing their effectiveness. At the same time, it preserves the possibility to choose less frequently used operations, promoting exploration and minimizing the likelihood of the search becoming stuck in local optima.

 The algorithm can be seen in algorithm \ref{algo:ALNS}. In this algorithm, six operations are used: random swap, random insertion, random arrangement, guided swap, guided insertion, guided arrangement. The first three operation are the random operations and the second three ones are the operations discussed in section \ref{NSG}. At the beginning, all six operations have the same weight and the possibility of choosing each one is equal \(1/6\). In first iteration, neighbors of the current solution are generated based on the operations. After generating all solutions, the best one is chosen and the the weight of the corresponding operation would be updated. This operation has a higher weight and is more probable to be chosen for generating new solutions in the next iteration. This process repeats in each iteration. In line \(16\) of the algorithm, solution \(i\) dominates solution \(cur\_best\) means that \(i\) is at least as good as \(cur\_best\) in all objectives and strictly better in at least one objective.  

\begin{algorithm}[H]
\caption{ALNS algorithm}\label{algo:ALNS}
\textbf{Input} \text{ List of tasks(\(Tasks\)), Number of AGVs (\(N_{AGV}\)) }  \\ 
\textbf{Output} \text{Best recommended task assignment }  

  \begin{algorithmic}[1]
  \State $I\_S = $  \text{Randomly assign set of tasks to AGVs }
  
  \State $done = False$
  \State \text{$R\_w = $ } \parbox[t]{.8\linewidth}{Initialize weights for operators} 
  \While{\(done \)}
  \State  \text{$m_1 =$} \parbox[t]{.8\linewidth} {Implement operations in section \ref{NSG} and random operations to  generate neighbors}
  \State $cur\_best = I\_S$
  \State $temp = True$
  \For{ $i \in m_1$}
  \If{$i$ \text{dominates} $cur\_best$}
  \State $cur\_best = i$
  \State $temp = False$
  
  \EndIf
  \EndFor

  \If{temp}
  \State \text{break}
  \EndIf

  \State $I\_S = cur\_best$
  \State \text{Update $R\_w$ }

  \EndWhile
  \State \textbf{Return } \text{$I\_S$}

  \end{algorithmic}
\end{algorithm}
\section{Results}\label{resul}
\subsection{Performance evaluation}
The metrics which we consider for evaluation are \textit{hypervolume (HV)}, and \textit{covered size of space (CSS)}.
\begin{itemize}

    \item The \textit{hypervolume (HV)} indicator represents the volume of the region in the objective space that is covered by the Pareto front approximation \(X_{N}\) and bounded above or below by a reference objective point \textit{r}. A sample of hypervolume calculation for a bi-objective optimization problem can be seen in figure \ref{HV representation}. Indeed, the higher the hypervolume value, the better the performance of the Pareto front, when all objectives are being minimized.

\begin{figure}[H]
\centering
    \includegraphics[width= 2.5 in]{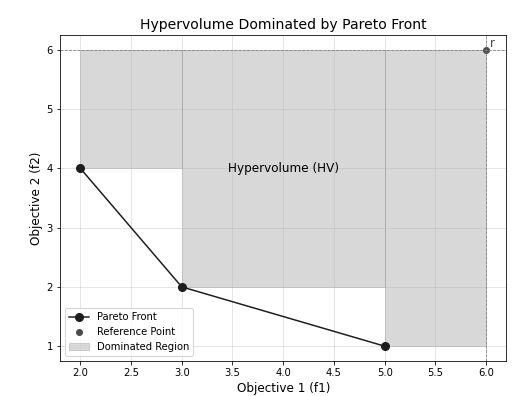}
\caption{Hypervolume sample }
\label{HV representation} 
\end{figure}
\item Ziztler et al.\cite{zitzler2000comparison} first introduced the \textit{covered size metric (CSS)} to compare the Pareto front solutions of two sets with each other. Given two sets \(X,Y\), the \textit{CSS} of these sets, shown as \(C(X,Y)\), computes the ratio of Pareto front solutions in set \(Y\) that are dominated by at least one solution in Pareto front of set \(X\). \(C(X,Y) = 0\) means that there is no solution in set \(Y\) that is dominated by any Pareto front solutions in set \(X\), while \(C(X,Y) = 1\) means that every solution in set \(Y\) is dominated by at least one solution in Pareto front of set \(X\). 

\begin{equation}\label{equal:CSS}
C(X,Y) =  \frac{|a_{Y}\in Y; \exists a_{X}\in X,a_{X}\leq a_{Y} |}{|Y|} 
\end{equation}

\end{itemize}

\subsection{Experimental results} 
\subsubsection{Priority rules}
The results for the priority rules are shown for thirteen scenarios in table \ref{tab:PR}. In this table, \textit{T} represents the number of tasks, and \textit{V} represents number of AGVs. For each scenario, one set of tasks is subjected to both scenarios for \(50\) times (\(50\) different combinations). For each set of tasks, the path routing algorithm for both benchmark and proposed priority rule are performed and the average values are calculated and can be seen in table \ref{tab:PR}.
\begin{table}[h]
\begin{center}
\caption{Summary of priority rules Comparison}
\label{tab:PR}
\begin{tabular}{c c c c}
   \hline
   |\(|T|\)|  &  |\(|V|\)|  & \multicolumn{1}{c}{Benchmark} & \multicolumn{1}{c}{Proposed}\\ [3pt]

       &   &   HV & HV   \\[2pt]
   \hline
   36 & 5 &3.550& \textbf{3.556}  \\[1pt]
   72	&5 & 33.580& \textbf{33.640}\\[1pt]
   100	&5 & 90.642& \textbf{90.778}\\[1pt]
   36	&8 & 0.786& \textbf{0.790}\\[1pt]
   72	&8 & 10.000& \textbf{10.040}\\[1pt]
   100	&8 & 31.788& \textbf{31.892}\\[1pt]
   36	&10 & 0.700& \textbf{0.702}\\[1pt]
   72	&10 & 6.256& \textbf{6.276}\\[1pt]
   100	&10 & 16.040& \textbf{16.122}\\[1pt]

\end{tabular}
\end{center}
\end{table}

\subsubsection{Task assignment algorithms}

The results for the comparison of the task assignment algorithms are shown in table \ref{tab:CA}. The results are for thirteen different scenarios. For each scenario, \(30\) different tasks combinations are processed. Each task combination has been run for \(20\) minutes. The results in the table shows the average of the Pareto front solutions. For each algorithm, results are provided for the proposed and benchmark versions. Also, for each version, \textit{HV} are calculated.      

\begin{table}[H]
\begin{center}
\caption{Summary of task assignment algorithms performance}
\label{tab:CA}
\begin{tabular}{c c c c c c }
   \hline
   |\(|T|\)|  &  |\(|V|\)|  & {ALNS\_benchmark} & {ALNS\_improved} & {NSGA-II\_benchmark} & {NSGA-II\_improved}\\ [3pt]

       &  &  HV & HV & HV &  HV   \\[2pt]
   \hline
   36 & 5 & 	0.102& \textbf{0.106}&			\textbf{0.118}& 0.117\\[1pt]
   72	&5 & 	1.991& \textbf{2.939}&	4.479 &\textbf{5.073}\\[1pt]
   100	&5 & 	5.128&	 \textbf{9.706}& 9.740&\textbf{13.532}\\[1pt]
   36	&8 & 	0.195 & \textbf{0.229}&	0.331& \textbf{0.701}\\[1pt]
   72	&8 & 	1.057& \textbf{1.448}& 1.683  &\textbf{1.862}\\[1pt]
   100	&8 &  	3.380& \textbf{5.084}&	5.940 &\textbf{6.720}\\[1pt]
   36	&10 & 	0.430& \textbf{0.440}&0.550 &\textbf{0.620}\\[1pt]
   72	&10 & 	8.704& \textbf{10.002}& 12.484 &\textbf{12.655}\\[1pt]
   100	&10 & 	2.212& \textbf{3.016}&	3.902&\textbf{4.226}\\[1pt]

\end{tabular}
\end{center}
\end{table}

The figure \ref{fig:TA_graph} shows the results of \(HV\) indicator for each algorithm in details. Figures \ref{fig: 5_36_TA}, \ref{fig: 8_36_TA} and \ref{fig: 10_36_TA} shows the results for 5, 8 and 10 AGVs, respectively.

\begin{figure}[H]
     \centering
     \begin{subfigure}[b]{1.1\textwidth}
         \includegraphics[width=\textwidth,left]{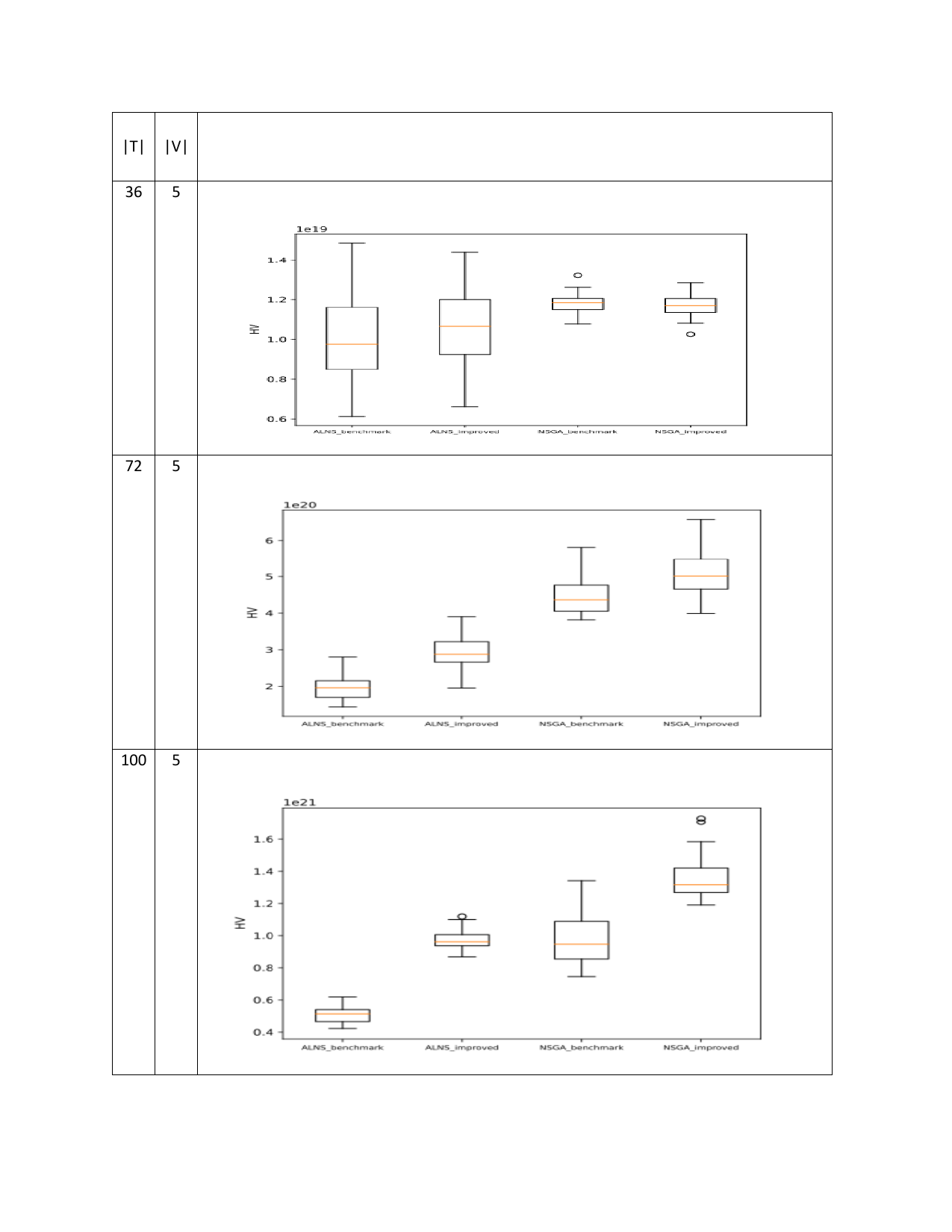}
         \caption{5 AGVs }
         \label{fig: 5_36_TA}
     \end{subfigure}
     \end{figure}
\clearpage

\begin{figure}[H]
\ContinuedFloat
     \centering
     \begin{subfigure}[b]{1.1\textwidth}
         \includegraphics[width=\textwidth,left]{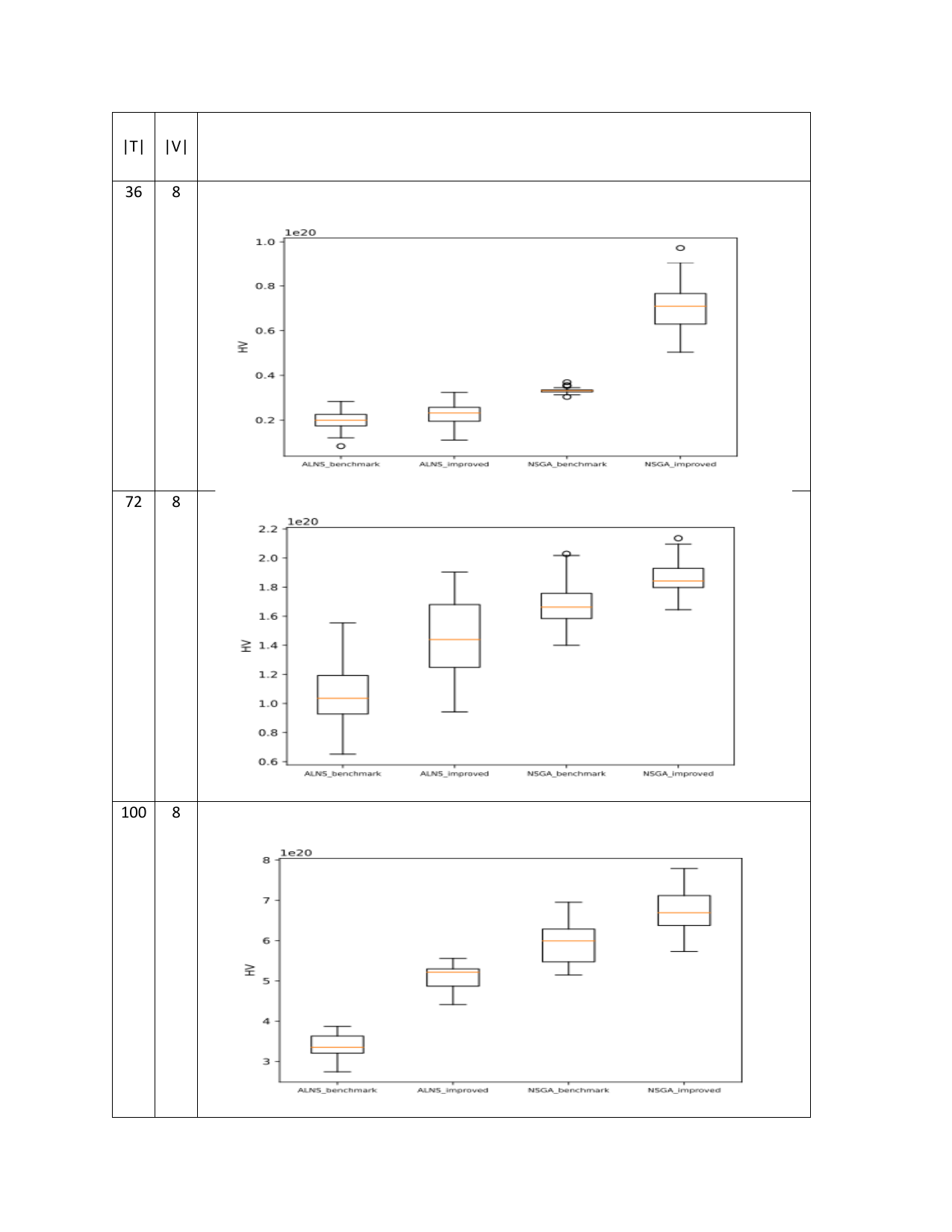}
         \caption{8 AGVs}
         \label{fig: 8_36_TA}
     \end{subfigure}

\end{figure}
\begin{figure}[H]
\ContinuedFloat
     \centering
     \begin{subfigure}[b]{1.1\textwidth}
         \includegraphics[width=\textwidth,left]{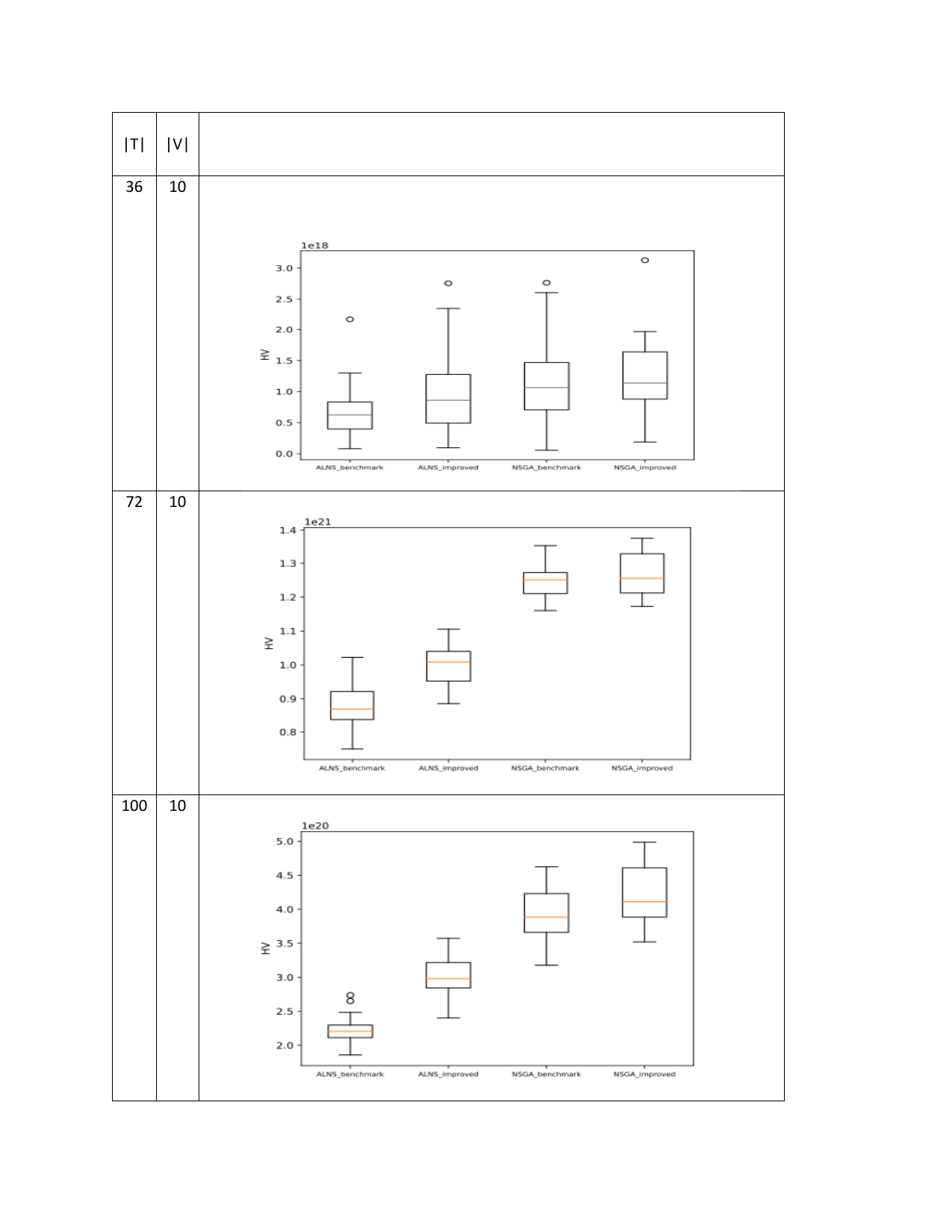}
         \caption{10 AGVs}
         \label{fig: 10_36_TA}
     \end{subfigure}
        \caption{Task assignment algorithms}
        \label{fig:TA_graph}
\end{figure}
The computational results indicated that the proposed NSGA-II and ALNS algorithms both outperform the benchmark NSGA-II and ALNS. The Kruskal-Wallis test has also been performed on the algorithms and it has confirmed that the performance of these algorithms come from different distributions. Comparing the NSGA and ALNS algorithms, it can be seen that NSGA-II has a better performance than ALNS, in both proposed and benchmark methods. We hypothesize that the reason is the proposed ARC technique for the NSGA-II algorithm. This approach helps the NSGA-II algorithm to keep a better balance of tasks among AGVs compared to the ALNS algorithm.

The second metric that we have considered is \textit{CSS} metric. The results of this metric can be seen in table \ref{tab:Csstot}. In this table, entries in row \(i\) and column \(j\) indicates the CSS metric comparing algorithm \(i\) to algorithm \(j\) for each scenario. It is observable that algorithm NSGA-II\_improved dominates the other algorithms. For example, for 8 AGVs, NSGA-II\_improved algorithm dominates all other algorithms more than \(99\) percent. The second best algorithm is NSGA-II\_benchmark which dominates ALNS algorithms more than \(90\) percent, on average. Regarding ALNS algorithms, ALNS\_improved completely dominates ALNS\_benchmark.  

\begin{table}[H]
\caption{CSS ratio performance of algorithms}
\label{tab:Csstot}
\begin{tabular}{ccccccc}

\hline
AGVs & Tasks & Algorithms  & ALNS\_bench & ALNS\_imp & NSGA-II\_bench & NSGA-II\_imp \\
\hline
5    & 36    & ALNS\_bench &       -      & 0.85      & 0.44        & 0.70       \\
     &       & ALNS\_imp   & 0.92        &   -        & 0.91        & 0.83      \\
     &       & NSGA-II\_bench & 0.80         & 0.85      &      -       & 1.00         \\
     &       & NSGA-II\_imp   & 0.80         & 0.83      & 0.86        &      -     \\ [7pt]
     & 72    & ALNS\_bench & -            & 0.38      & 0           & 0         \\
     &       & ALNS\_imp   & 1.00           &   -        & 0           & 0         \\
     &       & NSGA-II\_bench & 1.00           & 1.00         &       -      & 0.88      \\
     &       & NSGA-II\_imp   & 1.00           & 1.00         & 1.00           &       -    \\ [7pt]
     & 100   & ALNS\_bench &   -          & 0.01      & 0           & 0         \\
     &       & ALNS\_imp   & 1.00           &     -      & 0.18        & 0         \\
     &       & NSGA-II\_bench & 1.00           & 0.93      &     -        & 0.3       \\
     &       & NSGA-II\_imp   & 1.00           & 1.00         & 1.00           &     -      \\
     \hline
8    & 36    & ALNS\_bench &      -       & 0.57      & 0           & 0         \\
     &       & ALNS\_imp   & 1.00           &   -        & 0           & 0         \\
     &       & NSGA-II\_bench & 0.75        & 0.53      &      -       & 0         \\
     &       & NSGA-II\_imp   & 1.00           & 1.00         & 1.00           &      -     \\ [7pt]
     & 72    & ALNS\_bench &       -      & 0.63      & 0           & 0         \\
     &       & ALNS\_imp   & 1.00           &   -        & 0.07        & 0.04      \\
     &       & NSGA-II\_bench & 1.00           & 0.93      &      -       & 0.91      \\
     &       & NSGA-II\_imp   & 0.99        & 1.00         & 0.98        &      -     \\ [7pt]
     & 100   & ALNS\_bench &       -      & 0.12      & 0           & 0         \\
     &       & ALNS\_imp   & 1.00           &  -         & 0.03        & 0.02      \\
     &       & NSGA-II\_bench & 1.00           & 0.93      &      -       & 0.76      \\
     &       & NSGA-II\_imp   & 1.00           & 1.00         & 0.98        &    -       \\
     \hline
10   & 36    & ALNS\_bench &       -      & 0.52      & 0           & 0         \\
     &       & ALNS\_imp   & 1.00           &   -        & 0.01        & 0         \\
     &       & NSGA-II\_bench & 1.00           & 1.00         &       -      & 0.75      \\
     &       & NSGA-II\_imp   & 1.00           & 1.00         & 1.00           &      -     \\ [7pt]
     & 72    & ALNS\_bench &      -       & 0.46      & 0           & 0         \\
     &       & ALNS\_imp   & 1.00           &  -         & 0           & 0.04      \\
     &       & NSGA-II\_bench & 1.00           & 1.00         &       -      & 0.90       \\
     &       & NSGA-II\_imp   & 1.00           & 1.00         & 0.98        &      -     \\ [7pt]
     & 100   & ALNS\_bench &     -        & 0.22      & 0           & 0         \\
     &       & ALNS\_imp   & 1.00           &   -        & 0           & 0         \\
     &       & NSGA-II\_bench & 1.00           & 1.00         &     -        & 0.89      \\
     &       & NSGA-II\_imp   & 1.00           & 1.00         & 0.98        &      -    
\end{tabular}

\end{table}

\section{Managerial insights}\label{MI}
This paper focuses on the energy use of AGVs as an important factor in how well robotic systems work. To the best of the authors' knowledge, no previous research has studied how energy is used when solving collisions between AGVs in warehouses. The findings of this study offer managers a new way to think about AGV collisions, highlighting the potential to save energy in these situations. Even small energy savings can reduce how often AGVs need to recharge their batteries, which can save both time and money. The task of evaluating how this impacts real-world operations is left to managers and decision-makers in the industry.

Another focus of this paper is on how tasks are assigned to AGVs. In practice, managers may not fully realize how much task assignment affects the system’s performance, such as the time AGVs spend running. This study combines simple ideas with existing algorithms to make task assignments better. For example, the approach suggests assigning tasks in the same aisle to the same AGV, which helps reduce the distance AGVs need to travel and lowers the chance of collisions. By using the proposed methods, managers can improve how efficiently their systems work and make better use of AGVs.   

\section{Conclusion}\label{CONC}
This paper starts with emphasizing the importance of collision avoidance in robotic mobile fulfillment systems (RMFSs). The study sheds light on the less-studied area of energy consumption in the RMFS. We develop an algorithm incorporating running time and energy consumption of the AGVs, which is a function of loading status of the AGVs. This algorithm helps to set up a novel approach to foster the energy consumption and running time of the AGVs. The results shows the superiority of the proposed algorithm with the benchmark method.

Regarding task assignment, novelty comes from two parts: \((i)\) considering energy consumption of the AGVs. \((ii)\) proposed methods for assigning tasks to the AGVs. This paper first incorporates energy consumption with the existing objectives for performance evaluation of the system. Then, it compare the proposed methods with existing methods. Two algorithms are chosen to implement methods on them: NSGA-II and ALNS algorithms. NSGA-II is a multi objective algorithm and ALNS has been modified to include multiple objectives.  

The results showcase the power of proposed methods. For both algorithms, proposed methods dominates the existing methods overwhelmingly. In matter of comparing NSGA-II with the ALNS, it can be seen that NSGA-II works better. It stems from the proposed ARC operator which is introduced determining number of tasks for each AGV. This proposed ARC tries to keep balance of number of tasks for AGVs while exploring the solution space.

Future work could extend the current model to handle real-time task arrivals and energy-aware scheduling by incorporating dynamic order streams, battery constraints, and charging policies into the optimization framework.  

\section{Acknowledgments}
The authors acknowledge Clemson University for generous allotment of computational resources on Palmetto cluster.
\section{Data availability statement}
Data is available on request from authors.
\section{Disclosure statement}
There are no conflicts of interest.
\section{Funding}
This material is based upon work supported by the National Science Foundation under Grant No. 2322250.
\section{Notes on Contributors}
\hangindent=\dimexpr 1in+1em\relax
\hangafter=-8
\noindent\llap{\raisebox{\dimexpr\ht\strutbox-\height}[0pt][0pt]{\includegraphics[width=1in,height=1.1in]{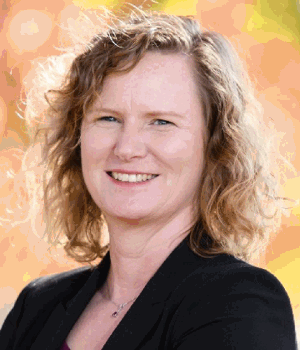}}%
    \hspace{1em}}
\textbf{Mary E. Kurz,} PhD, is an Associate Professor in Industrial Engineering at Clemson University. She is a member of INFORMS and a Senior member of the Institute of Industrial Engineers. Dr. Kurz received her doctoral degree from the University of Arizona in Systems and Industrial Engineering in 2001, focusing on scheduling flexible flowlines. Dr. Kurz is interested in improving the efficiency of large assembly systems through better integration of various functions, such as sequencing, line balancing, and material handling. She develops mathematical models, heuristics and metaheuristics to evaluate the potential for system improvements.
Mary E. Kurz holds concurrent appointments as an associate professor at Clemson University and as an Amazon Scholar. This manuscript describes work performed at Clemson University and is not associated with Amazon.\newline

\hangindent=\dimexpr 1in+1em\relax
\hangafter=-8
\noindent\llap{\raisebox{\dimexpr\ht\strutbox-\height}[0pt][0pt]{\includegraphics[width=1in,height=1.1in]{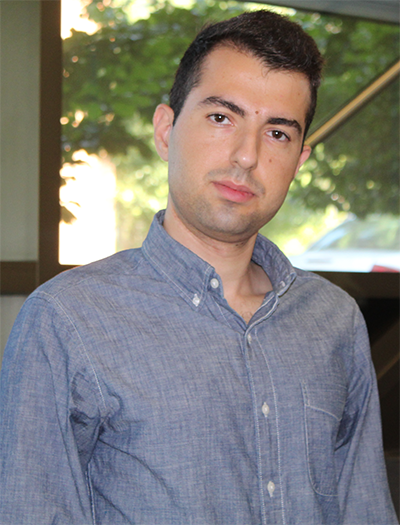}}%
    \hspace{1em}}
\textbf{Ahmad Kokhahi,} is a PhD student at Clemson University. He earned his bachelor degree from AmirKabir University, Iran in 2017. He earned his master's degree from Sharif University of Technology, Iran in 2020. His doctoral research focuses on warehouse optimization, particularly in RMFS within e-commerce environments. His work integrates heuristic task allocation, dynamic path planning, and energy-efficient collision avoidance strategies, achieving significant improvements in throughput, cost reduction, and operational efficiency.

\bibliographystyle{unsrt}
\bibliography{references}

\end{document}